\documentclass[letterpaper]{article}

\usepackage{natbib,alifeconf}  

\usepackage{amsmath}
\usepackage{amsthm}
\usepackage{thmtools}
\usepackage{amsfonts}
\usepackage{amssymb}
\usepackage{mathtools}

\usepackage{url,hyperref}
\usepackage{cleveref}
\usepackage{booktabs}

\usepackage{microtype}

\usepackage[T1]{fontenc}

\title{A ``Good Regulator Theorem'' for Embodied Agents}

\author{
    Nathaniel Virgo$^{1,2}$,
    Martin Biehl$^{3}$,
    Manuel Baltieri$^{4}$,
    \and
    Matteo Capucci$^{5}$\\
    $^1$University of Hertfordshire, UK 
    \hspace{1em}
    $^2$Earth-Life Science Institute (ELSI), Institute of Science Tokyo, Japan
    \\
    $^3$Cross Labs, Cross Compass Ltd., Japan
    \hspace{1em}
    $^4$Araya, Inc., Japan
    \hspace{1em}
    $^5$Independent researcher
} 



\newtheorem{theorem}{Theorem}[section]

\newtheorem{lemma}[theorem]{Lemma}

\theoremstyle{definition}
\newtheorem{definition}[theorem]{Definition}

\usepackage{silence}
\WarningFilter{amsmath}{Foreign command \atop;}

\usepackage{caption}
\usepackage{subcaption}

\usepackage{quiver}
\usepackage{outlines}








\usepackage{globalvals}
\newcommand{\defFootnote}{\defVal}
\newcommand{\insFootnote}[1]{\footnote{\useVal{#1}\label{#1}}}
%
%
%
%

\usepackage[normalem]{ulem}


\newcommand{\pow}{\mathcal{P}}


\defcitealias{conant1970}{C\&A}
\newcommand{\CA}{\citepalias{conant1970}}

\newcommand{\CAauthor}{\citetalias{conant1970}}

\usepackage{needspace}
\newcommand{\myparagraph}[1]{%
  \vspace{1mm}
  \needspace{2\baselineskip}%
  \noindent\textbf{#1.}}

\makeatletter
\let\save@mathaccent\mathaccent
\newcommand*\if@single[3]{%
  \setbox0\hbox{${\mathaccent"0362{#1}}^H$}%
  \setbox2\hbox{${\mathaccent"0362{\kern0pt#1}}^H$}%
  \ifdim\ht0=\ht2 #3\else #2\fi
  }
\newcommand*\rel@kern[1]{\kern#1\dimexpr\macc@kerna}
\newcommand*\widebar[1]{\@ifnextchar^{{\wide@bar{#1}{0}}}{\wide@bar{#1}{1}}}
\newcommand*\wide@bar[2]{\if@single{#1}{\wide@bar@{#1}{#2}{1}}{\wide@bar@{#1}{#2}{2}}}
\newcommand*\wide@bar@[3]{%
  \begingroup
  \def\mathaccent##1##2{%
    \let\mathaccent\save@mathaccent
    \if#32 \let\macc@nucleus\first@char \fi
    \setbox\z@\hbox{$\macc@style{\macc@nucleus}_{}$}%
    \setbox\tw@\hbox{$\macc@style{\macc@nucleus}{}_{}$}%
    \dimen@\wd\tw@
    \advance\dimen@-\wd\z@
    \divide\dimen@ 3
    \@tempdima\wd\tw@
    \advance\@tempdima-\scriptspace
    \divide\@tempdima 10
    \advance\dimen@-\@tempdima
    \ifdim\dimen@>\z@ \dimen@0pt\fi
    \rel@kern{0.6}\kern-\dimen@
    \if#31
      \overline{\rel@kern{-0.6}\kern\dimen@\macc@nucleus\rel@kern{0.4}\kern\dimen@}%
      \advance\dimen@0.4\dimexpr\macc@kerna
      \let\final@kern#2%
      \ifdim\dimen@<\z@ \let\final@kern1\fi
      \if\final@kern1 \kern-\dimen@\fi
    \else
      \overline{\rel@kern{-0.6}\kern\dimen@#1}%
    \fi
  }%
  \macc@depth\@ne
  \let\math@bgroup\@empty \let\math@egroup\macc@set@skewchar
  \mathsurround\z@ \frozen@everymath{\mathgroup\macc@group\relax}%
  \macc@set@skewchar\relax
  \let\mathaccentV\macc@nested@a
  \if#31
    \macc@nested@a\relax111{#1}%
  \else
    \def\gobble@till@marker##1\endmarker{}%
    \futurelet\first@char\gobble@till@marker#1\endmarker
    \ifcat\noexpand\first@char A\else
      \def\first@char{}%
    \fi
    \macc@nested@a\relax111{\first@char}%
  \fi
  \endgroup
}
\makeatother

\interfootnotelinepenalty=10000

\begin{document}
\frenchspacing

\maketitle

\begin{abstract}

In a classic paper, Conant and Ashby claimed that ``every good regulator of a system must be a model of that system.''
Artificial Life has produced many examples of systems that perform tasks with apparently no model in sight; these suggest Conant and Ashby's theorem doesn't easily generalise beyond its restricted setup.
Nevertheless, here we show that a similar intuition can be fleshed out in a different way: whenever an agent is able to perform a regulation task, it is possible for an observer to interpret it as having ``beliefs'' about its environment, which it ``updates'' in response to sensory input.
This notion of belief updating provides a notion of model that is more sophisticated than Conant and Ashby's, as well as a theorem that is more broadly applicable.
However, it necessitates a change in perspective, in that the observer plays an essential role in the theory: models are not a mere property of the system but are imposed on it from outside.
Our theorem holds regardless of whether the system is regulating its environment in a classic control theory setup, or whether it's regulating its own internal state; the model is of its environment either way.
The model might be trivial, however, and this is how the apparent counterexamples are resolved.

\end{abstract}

\section{Introduction}

The work of W.\ Ross Ashby had a profound impact on the development of Artificial Life, and of cognitive science more broadly.
In particular, \cite{ashby1960design} led to the idea of the \emph{viability boundary} of an organism, a notion of current interest in ALIFE \citep{mcshaffrey2023decomposing,beer2024viability,egbert2018methods} and a foundational concept in enactivism and the evolutionary robotics approach to cognition \citep[e.g.][]{Beer1995,beer1997dynamics,beer2004autopoiesis,DiPaolo2005adaptivity}.

In parallel, \citep{conant1970} claimed in its title that ``every good regulator of a system must be a model of that system'', an idea that was taken up in control theory under the name `internal model principle' \citep{francis1976internal} and whose influence on cognitive science can still be felt today, for example, in some works under the banner of the free energy principle (FEP) such as \citep{friston2019free,daCosta2021bayesian}, or in the information-theoretic definition of `semantic information' \citep{kolchinsky2018semantic}.

Although the ideas in \citep{ashby1960design} and \citep[hereafter \CAauthor{}]{conant1970} are not dissimilar, the bodies of work descending from them appear to have reached disparate conclusions: Artificial Life has produced many examples of agents that apparently do not need an internal model in order to achieve their tasks \citep[e.g.][]{braitenberg1986vehicles,beer2003dynamics}, apparently contradicting \CA{}.

This is in part because \CA{} doesn't strictly succeed, even in its own terms, in showing what its title claims. (This has been noted previously in blog posts: \citealp{baez2016blog,wentworth2021blog}.)
Its theorem doesn't establish that \emph{every} good regulator is a model but only that some of them are (those that are not ``unnecessarily complex'' in \CAauthor{}'s terminology).
Although there have been several attempts to generalise it beyond its simple setup, they often seem to need strong assumptions that limit their validity. (See \citealp{baltieri2025imp} for a recent exposition of the assumptions behind the internal model principle in control theory, for example.)

We claim, however, that these issues can be fixed in a way that removes the need for extra assumptions while also accounting for the apparent counterexamples.
To do this, we need to use a different notion of `model' than \CAauthor{}'s one.

For \citeauthor{conant1970}, for a system ${X}$ to \emph{be a model} of a system ${Y}$ there should be a ``\,`-morphic' relation'', i.e.\ a {homomorphism} of some kind from, ${Y}$ to ${X}$.
This means roughly that the model ${X}$ behaves like a coarse-grained version of the thing it's modelling, ${Y},$ with an intuition along the lines that a controller must behave like the system it's controlling in order to predict it.

In contrast, our notion of model builds on a recent notion of \emph{interpretation map} (\citealp{virgo2021interpreting,biehl_interpreting_2023}. The idea is that it must be possible for an observer to attribute belief states (priors) to the states of $X$ that change consistently with the dynamics of model $Y$. In the terminology of \cite{seth2018being} this is a version of ``having a model'' (as opposed to ``being a model'') in that it is about states encoding priors, although see the comments on `as-if' agency below. In the current work we consider ``possibilistic'' belief states, as in \citep[section III]{baltieri2025imp}, which are simpler than Bayesian priors, though closely analogous.

We show that this version of `has a model' has its own ``good regulator theorem'' (\cref{main-theorem.thm}), which doesn't need any assumptions beyond a fairly minimal definition of regulation.
Although this definition of regulation is minimal, it is flexible enough to apply to embodied agents.
The theorem shows that every good regulator has a model, but it allows that the model might be trivial, and this is how the apparent counterexamples are accounted for.

A related result is proved in \citep{baltieri2025imp} in the context of the Internal Model Principle (IMP).
Here we take a more first-principles approach, which doesn't need the assumptions of the IMP and leads to a simpler yet stronger theorem.
In particular, \citep{baltieri2025imp} requires that the controller can fully observe the plant, which limits the extent to which it can be applied to embodied agents.
Nevertheless, this prior work can be seen as showing that ``being a model'' implies ``having a model'' in senses similar to ours.

The study of interpretation maps builds on a research programme of `as-if' agency \citep{mcgregor2016asif,mcgregor2017bayesian,mcgregor2025formalising1,mcgregor2025formalising2}, which aims to formalise Dennett's ideas about the intentional stance \citep[e.g.][]{dennett2006systems}.
The idea of a \emph{stance} is central to our work: our notion of model depends fundamentally on choices that an external observer must make about how to interpret the system's internal state and behaviour.
A model is part of a stance the observer can optionally take with respect to the system, rather than a property of the system alone.
(In this respect our work could be characterised as ``is a model'' rather than ``has a model'' in \citeauthor{seth2018being}'s 
terms, but we will continue to use the ``has a model'' terminology.)
At the same time, the choice is not arbitrary: %
the patterns in a system's dynamics that make an interpretation possible are a ``real'' property of that system
 \citep[c/f][]{dennett1991-real-patterns}.

Our work could be seen as laying theoretical groundwork for similar ideas in machine learning, for example in work focusing on extracting Bayesian models \citep{ortega2019metalearningsequentialstrategies} or world-models \citep{richens2025generalagentsneedworld} from agents trained on multiple tasks.
There are also has some similarities to recent work on emergence \citep{rosas2024software}, and we suspect the two approaches will turn out to support each other, since our models are probably best thought of as interpretations of the macroscopic ``software'' level rather than the micro.
In the conclusion section we also discuss potential relations to the free energy principle (FEP).

Our notion of regulation includes both `extrinsic' cases such as a thermostat regulating the temperature of a room and `intrinsic' cases such as an organism regulating its own body temperature.
Because of this, we can claim as an informal corollary that \emph{every system that is a good regulator of itself, must have a model of its environment.}
This is perhaps surprising, and suggests a possible connection to notions of {sense-making}  in enactive cognitive science, where meaning is said to emerge from an organism's need to maintain itself as a physical metabolic system \citep{DiPaolo2005adaptivity,DeJaegher2007participatory}.
The term ``model'' might seem counter to enactivism, but our approach is 
closer to a dynamical systems approach than a traditionally computational one.
Our models arise from the dynamics rather than being invoked to explain them, and in a biological context they might have more in common with the enactive notion of meaning than with a traditional computationalist notion of model.

\section{Coupled systems and regulation}

A common framework for modelling embodied agents is the \emph{sensorimotor loop} \citep{klyubin_organization_2004,bertschinger_information_2006,zahedi_higher_2010}.
The precise details can differ, but one typically assumes, for simplicity, that time proceeds in discrete steps and that the states of the agent and its environment change over time according to a causal Bayesian network along the lines of the following:
\begin{equation}
    \label{sensorimotor-loop-dag}
    \begin{tikzpicture}[
            baseline = (current bounding box.center),
            rv/.style={circle,draw,
                inner sep=0pt,minimum size=1.7em},
            x=1.6em,
            y=1.5em,
        ]
        \node[rv] at (0,0) ({X}1) {${X}_1$};
        \node[rv] at (1,1) ({A}1) {${A}_1$};
        \draw[->] ({X}1) -- ({A}1);
        \node[rv] at (2,2) ({Y}1) {${Y}_1$};
        \draw[->] ({A}1) -- ({Y}1);
        \node[rv] at (3,1) ({S}1) {${S}_1$};
        \draw[->] ({Y}1) -- ({S}1);
        \node[rv] at (4,0) ({X}2) {${X}_2$};
        \draw[->] ({S}1) -- ({X}2);
        \draw[->] ({X}1) -- ({X}2);
        
        \node[rv] at (5,1) ({A}2) {${A}_2$};
        \draw[->] ({X}2) -- ({A}2);
        \node[rv] at (6,2) ({Y}2) {${Y}_2$};
        \draw[->] ({A}2) -- ({Y}2);
        \node[rv] at (7,1) ({S}2) {${S}_2$};
        \draw[->] ({Y}2) -- ({S}2);
        \node[rv] at (8,0) ({X}3) {${X}_3$};
        \draw[->] ({S}2) -- ({X}3);
        \draw[->] ({X}2) -- ({X}3);
        \draw[->] ({Y}1) -- ({Y}2);

        \node[rv] at (9,1) ({A}3) {${A}_3$};
        \draw[->] ({X}3) -- ({A}3);
        \node[rv] at (10,2) ({Y}3) {${Y}_3$};
        \draw[->] ({A}3) -- ({Y}3);
        \draw[->] ({Y}2) -- ({Y}3);

        \node[overlay] at (13,2) ({Y}n) {$\dots$};
        \draw[->] ({Y}3) -- ({Y}n);
        \node[overlay] at (11,0) ({X}n) {$\dots$};
        \draw[->] ({X}3) -- ({X}n);
    \end{tikzpicture}
\end{equation}
Here, ${X}_1, {X}_2, \dots$ are the agent's state at different times, ${Y}_1, {Y}_2, \dots$ the state of its environment (which may be taken to include the agent's position within it), ${A}_1, {A}_2, \dots$ the actions it takes on each time step and ${S}_1, {S}_2, \dots$ the inputs it gets to its sensors.

We will formalise the sensorimotor loop in a way that's close to this in spirit but different in look-and-feel, based on the concept of a `machine'.
Although the language of machines might feel computational in nature, machines are really just a formalism for talking about coupled dynamical systems in discrete time.
One should think of them as modelling an embodied agent coupled to its environment, in the same way that coupled continuous-time dynamical systems are used in \citep{Beer1995,beer1997dynamics} for example.

We restrict our attention to deterministic systems in discrete time, but the concept of machine can be extended to the stochastic case and the time-continuous case, among others---see \citep{myers2023categorical} for a common framework. 
Our results are somewhat specialised to the discrete time non-stochastic case.
Their extension to the stochastic case is not obvious, though it would be straightforward to generalise them to the possibilistic machines used in \citep{baltieri2025imp}.

We define two different types of machine, for reasons that will become apparent shortly.
We first fix sets ${A}$ and ${S}$, which are the set of actions the agent can take on each time step, and the set of states its sensors can take; together we call them the \emph{interface}.

\begin{definition}
    A \emph{Moore machine} consists of a \emph{state space}, which is a set ${X}$, together with a \emph{readout function} ${r}:{X}\to {A}$ and an \emph{update function} ${u}:{X}\times {S}\to {X}$.
\end{definition}
We will use a Moore machine to model the agent part of the sensorimotor loop.
The idea is that the map ${r}$ determines the action the agent takes, as a function of its state, while the map ${u}$ determines how the agent's internal state updates, as a function of its previous state and its sensor input.
Although we use the language of action selection and state updating, these should really just be thought of in dynamical 
terms; they are nothing but a description of how the system couples to its environment and changes over time.
We will think of readout as happening \emph{before} the update, so that on each time step the agent first takes an action and then receives a sensor input, at which point its state changes.

We model the environment with a different kind of machine, called a Mealy machine.
We define it with its inputs and outputs swapped compared to a Moore machine, since it takes the agent's action as an input and produces a sensor value as an output.
Its definition is slightly simpler than that of a Moore machine, in that it only has one function.
\begin{definition}
    A \emph{Mealy machine} consists of a \emph{state space}, which is a set ${Y}$, together with an \emph{evolution function} ${{e}:{Y}\times {A}\to {Y}\times {S}}$.
\end{definition}

The basic idea of the sensorimotor loop is that when an agent is coupled to an environment, they form a (closed) dynamical system.
Since we are dealing with discrete time and discrete state spaces, for us a ``dynamical system'' is just a set ${W}$ called the $\emph{state space}$, together with a function ${h}:{W}\to {W}$ called the \emph{update map}.
The update map ${h}$ takes the state at the current time and returns the next state.

We can now define what it means to couple an agent to its environment:
\begin{definition}
    \label{coupled-system.def}
    Given a Moore machine $({X},{{r}:{X}\to {A}},$ ${{u}:{X}\times{S}\to{X}})$ and a Mealy machine $({Y},{e}:{Y}\times {S}\to {Y}\times {A})$, the corresponding \emph{coupled system} is a dynamical system with state space ${X}\times {Y}$ and update map ${h}:{X}\times {Y}\to {X}\times {Y}$ given by
    \begin{equation}
        \label{coupled-system.eqn}
        {h}({x},{y}) = ({u}({x},{s})\,,\,{y}'),
    \end{equation}
    where ${y}'$ and ${s}$ are defined by $({y}',{s}) = {e}({y},{r}({x}))$.
\end{definition}
In the formal language of \emph{string diagrams} \citep[see e.g.][]{baez2010physics,selinger2010survey,coecke2017picturing}, \Cref{coupled-system.eqn} can be represented as
\begin{equation}
    \label{stringdiagram.eqn}
    h
    \,\,=\,\,
    \begin{tikzpicture}[
            baseline=(current bounding box.center),
            box/.style={
                draw,
                fill=white,
                inner sep=0pt,
                minimum size=1.5em
            },
            inout/.style={
                fill=white,
                inner sep=0pt,
                minimum size=1.5em
            },
            x=3.5em,
            y=1.3em,
            nudge/.store in=\nudge,
            nudge=0.35em,
        ]
        \draw [yshift=-\nudge] (0,0) -- (5,0);
        \draw [yshift=\nudge] (0,2) -- (5,2);
        \node[box] at (1.8,1) ({r}) {${r}$};
        \node[box] at (2.8,2) ({e}) {${e}$};
        \node[box] at (3.8,0) ({u}) {${u}$};
        \node[circle, fill=black, minimum size=1.5mm, inner sep=0,yshift=-\nudge] at (1,0) (copy) {};
        \draw (copy) to[out=60,in=180] ({r});
        \draw ({r}) to[out=0,in=180] ([yshift=-\nudge]{e}.west);
        \draw ([yshift=-\nudge]{e}.east) to[out=0,in=180] ([yshift=\nudge]{u}.west);
        \node[inout,yshift=-\nudge] at (0,0) {${X}$};
        \node[inout,yshift=-\nudge] at (5,0) {${X}$};
        \node[inout,yshift=\nudge] at (0,2) {${Y}$};
        \node[inout,yshift=\nudge] at (5,2) {${Y}$};
        \node at (2.35,1.1) {${}_{A}$};
        \node at (3.43,1.1) {${}_{S}$};
    \end{tikzpicture}
\end{equation}
Although this is a formal diagram it can be read in a very intuitive way, from left to right: first the agent takes an action determined by the map ${r}$, which gets the initial value of ${X}$ as an input.
Then the environment's state updates as a function of this action and the environment's previous state via the map ${e}$, which generates both a new value of ${Y}$ and a sensor value; and finally the agent's state updates via the map ${u}$, producing a new value of ${X}$. See \citep{biehl_interpreting_2023,baltieri2025imp} for more on string diagram approaches to the sensorimotor loop.
The relationship between string diagrams and Bayesian networks is explored in \citep{fritz2023d-separation,fong2013causal,Jacobs2019causal}.

The reason for defining two types of machine was so that we could couple them in this way, where an action takes place, the environment reacts and a sensor value is received all in the same time step.
This is a modelling choice, however---there is nothing inherently ``agent-like'' about Moore machines or ``environment-like'' about Mealy machines.
We could have made both systems Moore machines at a slight cost in generality, or we could have made the agent a Mealy machine and the environment a Moore machine, with only small changes to what follows.

\subsection{Defining regulation}
\label{regulation.sec}

From now on we fix an agent, in the form of a Moore machine $({X},{r}:{X}\to {A}, {u}:{X}\times {S}\to {X})$ and environment in the form of a Mealy machine $({Y},{e}:{Y}\times{S}\to{Y}\times {A})$.
Given this, we can ask what it means for the agent to perform regulation.

In a classical control theory setup, we would be thinking about the agent regulating the environment.
In that case, we would have a subset ${G}_{Y}\subseteq {Y}$ of the environment's state space called the \emph{good set}, and seek to find a controller (that is, a Moore machine) that, when coupled to the environment, keeps its state inside the good set.
For example, a ``good'' thermostat might be one that keeps its environment's temperature between $19\,{^\circ}\rm C$ and $25\,{^\circ}\rm C$, and the good set would consist of all environment states whose temperature is in this range.

There is a lot that can be said about this classical setup, but our interest here is in embodied biological agents.
Generally speaking, biological agents aren't concerned with regulating their environments but with regulating \emph{themselves}, i.e. their own internal physiological variables \citep{ashby1960design,beer2004autopoiesis}\insFootnote{autopoiesis.footnote}.
One way to set this up would be to have the good set be a subset ${G}_{X}\subseteq {X}$ of the \emph{agent}'s states, e.g. perhaps including a suitable range of body temperatures.
However, it turns out that for our purposes it is not hard to set things up in a more general way that encompasses both the classical setup and the Ashbian one.
This leads to the following simple, albeit slightly counterintuitive, definition:

\defFootnote{autopoiesis.footnote}{From the perspective of autopoiesis they regulate more than this: they maintain their autopoietic organisation {\citep[p.~79]{maturana1980autopoiesis}}. Describing this mathematically is a challenge beyond the scope of this paper, however. 
}

\begin{definition}
    \label{regulation-situation.def}
    A \emph{regulation situation} consists of the agent $(X,r,u)$ and environment $(Y,e)$, together with a subset ${{G}\subseteq {X}\times{Y}}$, which {we refer to} as the \emph{good set}.
\end{definition}

Here we define the good set as a subset of the states of the coupled system, rather than of the agent or the environment alone. This encompasses both the classical setup and the Ashbian one, because given a subset ${G}_{Y}\subseteq{Y}$ of environment states we can take $G$ to be the set of pairs $({x},{y})$ with ${x}\in{X}, {y}\in{G}_{Y}$, whereas given a subset ${G}_{X}\subseteq{X}$ of agent states we can take ${G}$ to consist of pairs $({x},{y})$ with ${x}\in{G}_{X}, {y}\in {Y}$.
We refer to \cref{regulation-situation.def} as a regulation \emph{situation} rather than a regulation \emph{problem} {since} we are not trying to find a controller that regulates a given environment; instead we follow \CAauthor{}'s approach of taking both the controller and the environment as given, along with the good set.

Intuitively, we want to say that the {agent performs regulation} %
if it keeps the state of the coupled system inside the good set.
However, in general, whether this happens or not is a function not just of the dynamics but also of the initial state of the coupled system, i.e. the combined initial state of \emph{both} the agent \emph{and} its environment.
This is illustrated in \cref{schematics.fig}, where trajectories stay inside ${G}$ indefinitely if and only if the initial state is in the region indicated in \cref{phase-space-with-G-and-maximal-R.subfig}.

\newcommand{\figquadrant}[2]{{%
\begin{subfigure}[b]{0.22\textwidth}%
    \includegraphics[width=\textwidth]%
    {images/#1.png}%
    \caption{}%
    \label{#1.subfig}%
\end{subfigure}%
}}

\begin{figure}[tb]
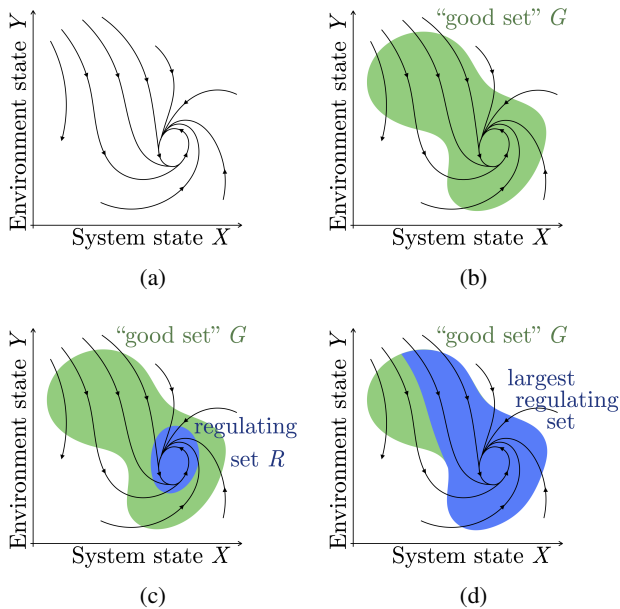
%
    \centering%
    \figquadrant{phase-space}{(a)}%
    ~~~~%
    \figquadrant{phase-space-with-good-set}{(b)}%
    \\[1em]%
    \figquadrant{phase-space-with-G-and-R}{(c)}%
    ~~~~%
    \figquadrant{phase-space-with-G-and-maximal-R}{(d)}%
    \\%
    \caption{\small Schematics drawn in continuous time \citep[cf.][]{Beer1995}, although the definitions in the main text are discrete. (a) A pair of coupled systems together form a dynamical system, some of whose variables belong to each system; in this schematic we assume one variable each, resulting in a two dimensional phase space.
    (b) The same phase space, equipped with a ``good set'' ${G}$.
    Not every trajectory that starts within the good set stays within it indefinitely, but some do.
    (c) A \emph{regulating set} is a nonempty subset of ${R}$ that is forward-closed.
    The existence of a regulating ${R}$ set witnesses that there are some trajectories that stay inside ${G}$ indefinitely.
    (d) There may be many regulating sets; here we show the largest one.
    }%
    \label{schematics.fig}%
\end{figure}

In order to take the state of the coupled system into account, we use the notion of \emph{forward-closed set}.
For our simple notion of dynamical system consisting only of a set ${W}$ and a function ${h}:{W}\to {W}$, a \emph{forward-closed set} is a subset ${V}\subseteq{W}$ such that for each ${v}\in {V}$ we have ${h}({v})\in {V}$.
In other words, to say ${V}$ is forward-closed means that if the state is in ${V}$ on the current time step then it will also be in ${V}$ on the next time step, and hence it will be inside ${V}$ in all future times as well.
An example of a forward-closed set is the basin of attraction of a fixed point, although {not all forward-closed sets are of this form}.
With this notion in hand we can define what it means to perform regulation in our framework.

\begin{definition}
    \label{good-regulator.def}
    Given a regulation situation as in \cref{regulation-situation.def}, we say that the agent is a \emph{good regulator} if we can exhibit a set ${R}\subseteq {X}\times{Y}$ that is (1) non-empty, (2) forward-closed, and (3) contained in ${G}$, i.e.\ ${R}\subseteq {G}$.
    We call such a set ${R}$ a \emph{regulating set}.
\end{definition}

If the initial state is in ${R}$ then the system will remain in ${G}$ indefinitely, so the existence of ${R}$ implies that regulation is possible, given an appropriate initial state.

\myparagraph{Some philosophical context}
The reader might have noticed a strange asymmetry in \cref{good-regulator.def}, and this seems a good point to address this, along with other philosophical issues that have arisen by this point.
The asymmetry is that we say the \emph{agent} is a good regulator, even though \cref{regulation-situation.def,good-regulator.def} are otherwise symmetric between agent and environment (aside from the modelling choice about Moore \emph{vs.}\ Mealy machines).
This is because we are really describing a \emph{stance} one can take toward the agent, of a similar flavour to Dennett's intentional stance.

We think of the agent and its environment as observed physical things (although one or both of them could equally be a model in the mind of an observer as we discuss below).
The two machines and the coupled system specify the dynamics of these systems, which is something physical and in principle empirically testable.
But the good set says something about what \emph{should} happen, and as such we don't expect it to be any kind of physical property of the system itself.
Instead, we think of it as something imposed on the system by the observer: the observer has decided to treat the Moore machine \emph{as if} it is an agent whose goal is described by the good set ${G}$.
This might be because the agent is a robot and the observer its designer, in which case ${G}$ is part of a specification of its desired behaviour, or because the system is an organism and the observer a biologist, in which case the good set could be thought of as a fiction standing in for some knowledge or a hypothesis about its evolutionary history.

We will  explore the consequences of this, from the perspective of an observer who is committed to taking such an `as-if' stance, treating this particular system as an agent with this particular good set ${G}$ and this particular regulating set~${R}$.
We will discuss the agent's `norms' and `beliefs', but it is important to keep in mind that these belong to the observer as much as to the agent itself; they are attributed to the agent by the observer, and a different observer might attribute different norms and beliefs to the same system.

A final philosophical point concerns the role of the regulating set ${R}$.
In general there might be many possible regulating sets, i.e.\ many non-empty forward-closed subsets of ${G}$, but in order to apply our theorem, one must pick one of them.
One can always take the largest regulating set (given by the union of all regulating sets, assuming at least one regulating set exists), but there may be reasons to consider another regulating set besides this one.
For example, proving that some particular set ${R}$ is a regulating set might be much easier than identifying the largest regulating set.
Different choices of ${R}$ will lead to different belief interpretations in \cref{forward-closed-and-interpretations.lemma}.

\section{Models and interpretations}
\label{interpretations.sec}

While \citeauthor{conant1970}'s notion of model is about a map from the system being modelled to the system doing the modelling, our theorem invokes a different kind of model, one that's closely related to so-called \emph{Bayesian filtering interpretations}, though not probabilistic in nature.

To set the stage we give some brief background on Bayesian interpretations.
The core idea is in some ways an old one, with roots going back to \citep{kalman1960new}, but interpretations were introduced as an explicit concept in `as-if' agency in \citep{virgo2021interpreting}, inspired by a category-theoretic approach to conjugate priors \citep{jacobs2020channel}.

To build a Bayesian filtering interpretation, we start with a dynamical system with inputs (which might be an embodied agent but doesn't have to be).
To \emph{interpret} it as performing Bayesian inference about some hidden variable, we regard its internal states as parametrising a prior over the hidden variable.
This means we have a map from the state space of the system to probability distributions over the hidden variable, which we think of as telling us ``what the system's prior is'', as a function of its state.
This must be such that whenever the system's state updates in response to an input, the new prior assigned to the system's new state must equal the posterior that Bayes' theorem mandates.
In other words, the following somewhat informal diagram should commute, where $\Delta({Y})$ is the space of priors over some hidden variable ${Y}$, the agent's state space is ${X}$, and $\psi:{X}\to \Delta({Y})$ is the interpretation map.
\begin{equation}
    \label{belief-updating-informal.eqn}
\begin{tikzcd}
	{\Delta({Y})} &&&&& {\Delta({Y})} \\
	\\
	{{X}} &&&&& {{X}}
	\arrow["{\text{update by Bayes conditioned on \({s}\in {S}\)}}", from=1-1, to=1-6]
	\arrow["\psi", from=3-1, to=1-1]
	\arrow["{\text{dynamical transition on receiving input \({s}\in {S}\)}}", from=3-1, to=3-6]
	\arrow["\psi", from=3-6, to=1-6]
\end{tikzcd}
\end{equation}
Making this formal requires specifying some other data, including the agent's \emph{model}, i.e.\ how the agent believes ${S}$ and ${Y}$ are correlated, and how it believes ${Y}$ changes over time, if ${Y}$ is not assumed to be constant.
Once the details are filled in, \cref{belief-updating-informal.eqn} becomes Equation 5 in \citep{virgo2021interpreting}, the so-called \emph{consistency equation for Bayesian filtering interpretations}.
Our notion of interpretation below has this same general form but differs in its details from this previous work.

Regardless of how the details are filled in, there can be (and generally are) many interpretations of a given system.
It doesn't make sense to ask which interpretation is the correct one, or to the extent that it does, it isn't a question about the system but about something external to it, such as the intention of its designer.
To interpret a system as performing inference requires choosing an interpretation {map}, and this choice belongs to the observer and not to the system itself.

In \citep{biehl_interpreting_2023} this idea was extended to the case of an agent that takes actions on the basis of its Bayesian beliefs, which we will also be concerned with below.
In \citep{baltieri2025imp} the idea is instantiated in the realm of \emph{possibilistic} reasoning, where the agent only cares %
what can or can't happen, as opposed to \emph{probabilistic}, where the possibilities are assigned probabilities.
This is also what we will do, although also in a slightly different way.

Let us now set out the specific notion of interpretation that's relevant for our main theorem, \cref{main-theorem.thm}. We want to consider an agent as having beliefs about its environment, which in our framework means a Moore machine having beliefs about a Mealy machine.

As before, we start with a Moore machine given by $({X}, {{r}:{X}\to {A}},\, {u}:{X}\times{S}\to{X})$ and a Mealy machine which we will write as $({Z},{f}:{Z}\times {A}\to {Z}\times {S})$. 
This Mealy machine is not necessarily the same as the environment $(Y,e)$ mentioned previously.
Instead,
we will think of it as the thing the agent \emph{has beliefs about}.
(Meaning: the thing about which some observer interprets it as having beliefs.)
The picture to have in mind is that the Moore machine might be coupled to an environment $(Y,e)$, but independently of that it might `believe' it is coupled to a different environment $(Z,f)$.
As discussed in \citep{virgo2021interpreting}, 
it might be that $(Y,e)=(Z,f)$,
but it need not be, since we want to allow the possibility that the agent is not in its natural environment, or otherwise has an incorrect model
of how the environment behaves.
All the machines have the same interface, $(S,A)$.

We sometimes refer to the Mealy machine $({Z},{f})$ as the agent's model (as attributed to it by an observer).
This sense of `model' is very different from Conant and Ashby's one.
We sometimes talk about the agent \emph{having} 
 a model of $(Z,f)$,
although to be precise we should say ``the observer interprets the agent as having a model,'' since
the model really belongs as much to the observer as it does to the system itself.

As discussed, to talk about the agent having beliefs about the environment we should define an interpretation map, which should obey certain conditions.
In the present work, this will be a function $\psi:{X}\to\pow({Z})$, where $\pow$ 
means power set.
The idea is that when the agent is in state ${x}\in {X}$, we interpret it as believing that the environment is in one of the states in $\psi({x})$, which is a subset of ${Z}$.
If the set $\psi({x})$ is a singleton $\{{z}\}$, the agent is said to be certain that the environment is in state ${z}$.
If $\psi({x})={Z}$ then we say the agent is completely uncertain about the environment's state; it could be in any state at all.
For anything in between the agent has some partial certainty---it is interpreted as knowing that the state is within some set but not precisely what the state is.
It is also possible that $\psi({x})=\emptyset$, in which case we say the agent has `absurd' beliefs.
This is similar to believing that $0=1$; it signifies an inconsistency in the agent's beliefs, since there is no state of the environment that satisfies them.

\looseness-1
What conditions should we require the interpretation map $\psi:{X}\to\pow({Z})$ to have in order to be valid?
To determine this, let us put ourselves in the mind of someone who knows that the state of the environment is within some set ${B}\subseteq {Z}$.
Suppose that this person then takes some action ${a}\in {A}$ (for the moment we will not worry about how this action was selected) and then receives some sensor value ${s}\in {S}$.
Knowing the values of ${a}$ and ${s}$, this person can then reason as follows:

``I know that initially the environment was in some state ${z}\in {B}$ and that I took action ${a}$, and that means the inputs to the 
 function 
${f}$ must have been $({z},{a})$ for some ${z}\in {B}$.
So, before observing the sensor value ${s}\in {S}$, I knew that the new state and the sensor value together had to be in the set $\{\,({z'},{s})\in {Z}\times {S} \mid \exists\, {z}\in {B}:({z'},{s})={f}({z},{a}) \,\}$.
But in fact I know the value of ${s}$, and so I can deduce that the new state of the environment must be within the set
\newcommand{\update}[3]{\mathsf{update}(#1,#2,#3)}
\begin{multline}
    \label{updated-beliefs.eqn}
    \update{{B}}{{a}}{{s}}\coloneqq \\\{\,{z'} \in {Z} \mid \exists\, {z}\in {B}:{({z'},{s}) = {f}}({z},{a}) \,\},
\end{multline}
where ${s}$ is the known sensor value.''

\Cref{updated-beliefs.eqn} expresses a kind of belief updating, somewhat akin to Bayesian filtering, but in a possibilistic rather than probabilistic framework.
$\update{{B}}{{a}}{{s}}$ is the posterior beliefs about the next environment state, given prior beliefs ${B}$ about the current environment state and data about the action taken and sensor value received, ${a}$ and ${s}$. 

Before turning this into a formal definition of consistency for an interpretation map, we will allow our imaginary person one extra step in their reasoning: they can decide to forget information.
Although the ideal posterior is given by $\update{{B}}{{a}}{{s}}$, we will allow the person to adopt any posterior ${C}$ as long as $\update{{B}}{{a}}{{s}}\subseteq {C}$.
The set ${C}$ can contain less information than $\update{{B}}{{a}}{{s}}$, in the sense that it puts less constraint on what the environment's state might be.
This `forgetting' step might be a pragmatic choice if the more specific information is not needed any more to achieve a task that the person is trying to do.

Returning to our goal of attributing beliefs to a Moore machine, the idea is that as the agent takes actions and receives sensor values, the image of $\psi$ should change over time in the same way that a person's beliefs would, if they were reasoning in the way described above.
This leads to the following definition:

\begin{definition}
    \label{consistent.def}
    Given the agent $({X},{r},{u})$, a model in the form of a Mealy machine $({Z},{f})$ and
    a function $\psi:{X}\to \pow({Z})$, we say $\psi$ is a \emph{consistent belief map} if for every ${x}\in {X}$ and every ${s}\in {S}$ we have $\update{\psi({x})}{{r}({x})}{{s}}\,\subseteq\,\psi({{u}({x},{s})})$.
    That is,
    \begin{multline}
    \label{consistency-equation-GR-version.eqn}
    \{\,{z'} \in {Z} \mid \exists\, {z}\in \psi({x}):({z'},{s}) = {f}({z},{r}({x})) \,\} \\\,\,\subseteq\,\,\psi({{u}({x},{s})}).
    \end{multline}
\end{definition}
\looseness-1
If $\psi$ is a consistent belief map then we  say that $(X,r, u)$ \emph{can be interpreted as having model $(Z,f)$}, since the beliefs $\psi$ assigns to $(X,r,u)$ update in a way consistent with $(Z,f)$.
The idea is that $\psi({x})$ represents the beliefs that the observer attributes to the agent when it is in state ${x}$.
For these beliefs to update consistently over time, the beliefs attributed to ${x'}$ must behave like the posterior beliefs of the person in the informal argument above, i.e.\ they must be a superset of 
$\update{{B}}{{a}}{{s}}$,
where
${B}=\psi({x})$ is the prior beliefs, ${a}={r}({x})$ is the action taken and ${s}$ is the sensor value received.\insFootnote{impossible.footnote}

\defFootnote{impossible.footnote}{It is possible for the left-hand side of \cref{consistency-equation-GR-version.eqn} to be the empty set, which means that $s$ is a ``subjectively impossible'' sensor value, as considered in \citep{virgo2021interpreting}. That is, it can't occur, according to the agent's prior beliefs and model.
In this case \cref{consistency-equation-GR-version.eqn} is always satisfied, so it imposes no constraint on the posterior beliefs.
If ${B}$ is non-empty then there is always at least one ${s}\in {S}$ such that $\update{{B}}{{a}}{{s}}$ is non-empty, i.e.\ there is always at least one subjectively possible sensor value.
}

We now show that there is a correspondence between belief interpretations (that is, consistent belief maps) and forward-closed sets.
Given the coupled system from \cref{coupled-system.def} (which involves the `true' environment $({Y},{e})$) and a forward-closed set ${R}$, we can define a map $\psi:{X}\to\pow({Y})$ by
\begin{equation}
\label{psi-from-R.eqn}
\psi({x})\coloneqq \{\,{y}\in{Y}\mid ({x},{y})\in {R} \,\}.    
\end{equation}
In the following lemma, we show that \cref{psi-from-R.eqn} defines a consistent belief map, where the model $({Z},{f})$ is the same as the true environment $({Y},{e})$.
This is the main technical result needed to prove our ``good regulator theorem'' for embodied agents (the upcoming \cref{main-theorem.thm}).

\begin{lemma}
    \label{forward-closed-and-interpretations.lemma}
    Consider an agent $({X},{r},{u})$ and the environment $({Y},{e})$
    together with a subset ${R}\subseteq {X}\times{Y}$. We have that ${R}$ is forward-closed if and only if $\psi:{X}\to\pow({Y})$ as defined by \cref{psi-from-R.eqn} is a consistent belief map as per \cref{consistent.def}, with model $({{Z},{f}}) = ({Y},{e})$.
\end{lemma}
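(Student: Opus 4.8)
The plan is to prove both implications by directly unfolding \cref{consistent.def}, the definition \cref{psi-from-R.eqn} of $\psi$, and the coupled update map ${h}$ from \cref{coupled-system.def}, and to observe that, after this unfolding, forward-closedness of ${R}$ and consistency of $\psi$ say literally the same thing. Concretely, using the equivalence $({x},{y})\in{R}\iff {y}\in\psi({x})$, the left-hand side of \cref{consistency-equation-GR-version.eqn} with $({Z},{f})=({Y},{e})$ becomes $\{\,{z'}\in{Y}\mid \exists\,{z}:({x},{z})\in{R}\text{ and }({z'},{s})={e}({z},{r}({x}))\,\}$, while the right-hand side $\psi({u}({x},{s}))$ becomes $\{\,{z'}\in{Y}\mid ({u}({x},{s}),{z'})\in{R}\,\}$. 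So $\psi$ is a consistent belief map iff for all ${x}\in{X}$, ${s}\in{S}$ and ${z}\in{Y}$,
\[
({x},{z})\in{R}\ \wedge\ ({z'},{s})={e}({z},{r}({x}))\ \Longrightarrow\ ({u}({x},{s}),{z'})\in{R}.
\]
On the other hand, by \cref{coupled-system.eqn}, ${R}$ is forward-closed iff for every $({x},{z})\in{R}$ we have $({u}({x},{s}),{z'})\in{R}$, where $({z'},{s})={e}({z},{r}({x}))$. The two statements differ only in whether ${s}$ is universally quantified or pinned to the value ${e}$ emits.

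For the ($\Leftarrow$) direction I would take $\psi$ consistent and an arbitrary $({x},{z})\in{R}$, set $({z'},{s})\coloneqq{e}({z},{r}({x}))$, and invoke the displayed condition with this particular ${s}$: its hypotheses hold by construction, so $({u}({x},{s}),{z'})\in{R}$, which by \cref{coupled-system.eqn} is exactly ${h}({x},{z})\in{R}$; hence ${R}$ is forward-closed. For the ($\Rightarrow$) direction I would take ${R}$ forward-closed, fix ${x}\in{X}$, ${s}\in{S}$, and let ${z'}$ be any element of the left-hand side of \cref{consistency-equation-GR-version.eqn}, witnessed by some ${z}\in\psi({x})$ with $({z'},{s})={e}({z},{r}({x}))$. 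Then $({x},{z})\in{R}$, so forward-closedness gives ${h}({x},{z})\in{R}$; writing ${h}({x},{z})=({u}({x},{s''}),{z''})$ with $({z''},{s''})={e}({z},{r}({x}))$ and using that ${e}$ is a function, we get $({z''},{s''})=({z'},{s})$, so ${h}({x},{z})=({u}({x},{s}),{z'})\in{R}$, i.e.\ ${z'}\in\psi({u}({x},{s}))$, as needed.

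The only delicate point --- the closest thing to an obstacle --- is reconciling the universally quantified ${s}$ in \cref{consistency-equation-GR-version.eqn} with the fact that in the forward-closed condition the sensor value is the specific one emitted by ${e}$. This is handled by two observations: first, whenever ${s}$ is not the second coordinate of ${e}({z},{r}({x}))$ for any ${z}\in\psi({x})$, the left-hand side of \cref{consistency-equation-GR-version.eqn} is empty and the inclusion is vacuous, so nothing is lost by restricting attention to the ${s}$ that actually occur; and second, determinism of ${e}$ (it is a genuine function ${Y}\times{A}\to{Y}\times{S}$, not a relation) is exactly what lets us match the ${s}$ of the consistency equation with the one appearing inside ${h}$. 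Everything else is routine manipulation of set-builder notation, so I do not expect any real difficulty.
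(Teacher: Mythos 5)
Your proof is correct and follows essentially the same route as the paper's: both arguments unfold \cref{psi-from-R.eqn} and \cref{coupled-system.eqn} and observe that forward-closedness and consistency coincide once one notes that for sensor values $s$ not actually emitted by $e$ the left-hand side of \cref{consistency-equation-GR-version.eqn} is empty and the inclusion vacuous. The paper presents this as a single chain of equivalences rather than two separate implications, but the content, including your ``delicate point,'' is identical.
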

\begin{proof}
We express the proof as a chain of inferences.
%
%
%
%
\begin{equation*}
\begin{aligned}
    &\text{${R}$ is forward-closed} \\
    \Longleftrightarrow\,\,&\parbox[t][]{3.1in}{
    for every $({x},{y})\in {R}$ we have $({u}({x},{s}),{y'})\in {R}$,\\where $({y}',{s})={e}({y},{r}({x}))$
    } \\
    \Longleftrightarrow\,\,&\parbox[t][]{3.1in}{
    for every ${x}\in{X},y\in\psi({x})$ we have\footnotemark \\${y}'\in\psi({u}({x},{s}))$, where $({y}',{s})={e}({y},{r}({x}))$\\ (by \cref{psi-from-R.eqn})
    } \\
    \Longleftrightarrow\,\,&\parbox[t][]{3.1in}{
    for every ${x}\in{X},y\in\psi({x}),{s}\in{S}$ we have \\$\{\,{y}'\in{Y}\mid ({y}',{s})={e}({y},{r}({x}))\,\}\subseteq\psi(u(x,s))$
    } \\
    \Longleftrightarrow\,\,&\parbox[t][]{3.1in}{
    for every ${x}\in{X},{s}\in{S}$ we have}
\end{aligned} \atop
    \displaystyle{}\bigcup_{{y}\in\psi({x})}\{{y}'\in{Y}\mid ({y}',{s})={e}({y},{r}({x}))\}\subseteq\psi(u(x,s)),
\end{equation*}
\footnotetext{Note when $\psi(x) = \emptyset$, universal quantification over it is always `vacuously' true.}
where the left-hand side of the last inequality 
is equal to
\begin{equation*}
\begin{multlined}[b]
    \{{y}'\in{Y}\mid \exists {y}\in{\psi(x)}:({y}',{s})={e}({y},{r}({x}))\}\\
    = \update{\psi(x)}{r(x)}{s}.\qquad\qquad
\end{multlined}\qedhere   
\end{equation*}
\end{proof}

\Cref{forward-closed-and-interpretations.lemma} establishes a correspondence between forward-closed sets and consistent belief maps.
However, we don't \emph{just} want to interpret the agent as having beliefs about its environment, we also want to interpret it as taking actions that are consistent with its beliefs.
To do that we need to reason about what the agent \emph{wants} (or rather what the observer chooses to interpret it as wanting), and for that reason we introduce a second function, $\phi:{X}\to\pow({Z})$, called the \emph{normative map}.
Unlike the belief map $\psi$, the map~$\phi$ doesn't have to obey any consistency condition.

The idea is that if the agent is in state ${x}\in{X}$ then its goals are satisfied as long as the environment is in one of the states in $\phi({x})\subseteq {Z}$.
If this ever fails to be the case then the agent has failed at its task.
The normative map will end up relating to ${G}$ in the same way that the belief map relates to ${R}$.

It is possible for $\phi({x})$ to be the empty set, which means that the agent never wants to enter state ${x}$, regardless of the environment's state.
It is also possible that $\phi({x})={Z}$, in which case the agent's goal is satisfied when the agent is in state ${x}$, regardless of the state of the environment.

The following definition summarises this ``subjective'' notion of regulation, in which the agent should satisfy its own subjective goals (given by $\phi$) with respect to its own beliefs about its assumed environment (given by $\psi$).
This is subjective in that it's regulation ``from the agent's point of view,'' as attributed to it by an observer.

\begin{definition}
    \label{subjective-good-regulator.def}
    Given the agent $({X},{r},{u})$, a model $({Z},{f})$,
    a consistent belief map $\psi:{X}\to \pow({Z})$ and a normative map $\phi:{X}\to\pow({Z})$, we say the agent is a \emph{subjective good regulator} if $(i)$ we have $\psi({x})\subseteq \phi({x})$ for all ${x}\in{X}$, and $(ii)$ there exists ${x}_0\in {X}$ such that $\psi({x}_0)$ is non-empty.
\end{definition}

Thus, firstly, there should be some state ${x}_0$ in which the agent should have consistent beliefs (i.e. $\psi({x}_0)$ is non-empty) and it should believe that its goal is currently satisfied when it's in that state, i.e.\ $\psi({x}_0)\subseteq\phi({x}_0)$.
If this were not the case the agent would already have failed in its task, and hence would not be a good regulator from its own point of view.
We can assume the agent starts in state $x_0$.

In addition to this, requiring that $\psi$ be a consistent belief interpretation implies that given the action ${a}={r}({x})$ that the agent takes and any sensor value ${s}\in{S}$ that can occur according to the agent's model, its posterior beliefs will also be such that its goal is satisfied.
We can interpret this as saying that the agent will always take an action such that, from its subjective point of view (as attributed by the observer) its goal will still be satisfied in the next time step---and hence also in all future time steps.

There may be states $x$ where the agent's beliefs are inconsistent (i.e.\ such that $\psi(x)=\emptyset$).
This doesn't cause problems because such states can never be entered, as long as the sensor values the agent receives are subjectively possible ones (i.e.\ possible according to its beliefs).
For states with inconsistent beliefs, the condition $\psi(x)\subseteq\phi(x)$ is satisfied automatically.

\subsection{Every good regulator has a model}

As mentioned previously, the interpretation maps $\psi:{X}\to \pow({Z})$ and $\phi:{X}\to \pow({Z})$ are really just the regulating set ${R}\subseteq {X}\times{Y}$ and the good set ${G}\subseteq {X}\times{Y}$ in disguise.
Given ${G}$ and ${R}$, we can set $({Z},{f}) = (Y,e)$ and define $\psi({x}) = \{\,{y}\in{Y}\mid ({x},{y})\in{R}\,\}$ as in \cref{psi-from-R.eqn}, along with
\begin{equation}
    \label{phi-from-G.eqn}
    \phi({x}) \coloneqq \{\,{y}\in{Y}\mid ({x},{y})\in{G}\,\}.
\end{equation}
We claim that with $\phi$ and $\psi$ defined this way, our two definitions of regulation are equivalent, i.e.\ that a Moore machine is a good regulator of a system if and only if it is a subjective good regulator of that system.
If this is so then we have a theorem along the lines of ``every good regulator has a model,'' since being a subjective good regulator involves having beliefs about the environment, which are updated in a similar way to a Bayesian model.

This claim is the subject of our main theorem, whose proof is by now rather straightforward:
\begin{theorem}
    \label{main-theorem.thm}
    Consider the agent $({X},{r},{u})$ and the environment $({Y},{e})$,
    together with sets ${G}\subseteq {X}\times{Y}$ and ${R}\subseteq {X}\times{Y}$.
    Then the  agent is a good regulator 
    in the sense of \cref{good-regulator.def}, with good set ${G}$ and regulating set ${R}$, if and only if it is a subjective good regulator in the sense of \cref{subjective-good-regulator.def}, with model $({Z},{f})=({Y},{e})$, belief map $\psi$ given by \cref{psi-from-R.eqn} and normative map $\phi$ given by \cref{phi-from-G.eqn}.
\end{theorem}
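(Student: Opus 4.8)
The plan is to unpack both definitions and match their clauses one-to-one, leaning on \cref{forward-closed-and-interpretations.lemma} for the only non-trivial correspondence. Being a good regulator means exhibiting an $R$ that is (1) non-empty, (2) forward-closed, and (3) contained in $G$. Being a subjective good regulator with the stipulated data means that $\psi$ (defined by \cref{psi-from-R.eqn}) is a consistent belief map with model $(Y,e)$, and moreover (i) $\psi(x)\subseteq\phi(x)$ for all $x$, and (ii) $\psi(x_0)\neq\emptyset$ for some $x_0$. I would prove the theorem by showing that conditions (2), (3), (1) are equivalent, respectively, to ``$\psi$ is a consistent belief map'', (i), and (ii), and then conjoining the three equivalences.

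For (2): forward-closedness of $R$ is equivalent to $\psi$ being a consistent belief map with model $(Z,f)=(Y,e)$ — this is exactly \cref{forward-closed-and-interpretations.lemma}, applied to the $\psi$ of \cref{psi-from-R.eqn}. For (3): using \cref{psi-from-R.eqn,phi-from-G.eqn}, the statement $R\subseteq G$ says that for every pair $(x,y)$, $(x,y)\in R$ implies $(x,y)\in G$; reorganising the quantifiers, this is ``for every $x$ and every $y\in\psi(x)$ we have $y\in\phi(x)$'', i.e.\ $\psi(x)\subseteq\phi(x)$ for all $x$, which is (i). For (1): $R\neq\emptyset$ says there is a pair $(x_0,y_0)\in R$, equivalently an $x_0$ for which some $y_0$ lies in $\psi(x_0)$, i.e.\ $\psi(x_0)\neq\emptyset$, which is (ii). Combining, ``(1) and (2) and (3)'' holds iff ``$\psi$ consistent, (i), and (ii)'' holds, which is the theorem.

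I do not expect any real obstacle here; the substantive work was already done in \cref{forward-closed-and-interpretations.lemma}, and what remains is bookkeeping. The one point to be careful about is that being a subjective good regulator \emph{presupposes}, as part of its input data, that $\psi$ is a consistent belief map, so this presupposition has to be explicitly folded into the right-hand side of the biconditional — it is precisely what clause (2) on the left-hand side supplies. It is also worth remarking for the reader that \cref{psi-from-R.eqn} sets up a bijection between subsets $R\subseteq X\times Y$ and functions $X\to\pow(Y)$ (with inverse $\psi\mapsto\{(x,y)\mid y\in\psi(x)\}$), and likewise \cref{phi-from-G.eqn} does so for $G$ and $\phi$; this makes transparent that the translation between the ``objective'' ($R,G$) and ``subjective'' ($\psi,\phi$) descriptions loses no information, so the three clause-by-clause correspondences above are genuine equivalences rather than one-way implications.
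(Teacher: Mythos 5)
Your proposal is correct and follows essentially the same route as the paper's proof: both decompose the theorem into the three clause-by-clause equivalences (forward-closedness of $R$ versus consistency of $\psi$ via \cref{forward-closed-and-interpretations.lemma}, non-emptiness of $R$ versus existence of $x_0$ with $\psi(x_0)\neq\emptyset$, and $R\subseteq G$ versus $\psi(x)\subseteq\phi(x)$ for all $x$). The additional remark about the bijection between subsets of $X\times Y$ and functions $X\to\pow(Y)$ is a helpful clarification but not needed beyond what the paper already does.
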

\begin{proof}
    We only need to note that the three components of the two definitions correspond to each other:
    \begin{enumerate}
        \itemsep0pt
        \item ${R}$ is forward-closed if and only if $\psi$ is a consistent belief interpretation. (This was shown in \cref{forward-closed-and-interpretations.lemma}.)
        \item ${R}$ is non-empty if and only if there exists ${x}\in{X}$ such that $\psi({x})$ is non-empty, since ${R}=\{({x},{y})\mid {x}\in {X}, {y}\in \psi({x})\}$.
        \item ${R}\subseteq{G}$ if and only if for every ${x}\in {X}$ we have $\psi({x})\subseteq\phi({x})$. This follows from the definitions of $\psi$ and $\phi$.
        \hfill\qedhere%
    \end{enumerate}%
\end{proof}

Our theorem only says that the agent \emph{can} be interpreted such that the model $({Z},{f})$ is identical to the `true' environment $({Y},{e})$, not that it must be.
It might be that a different interpretation with a simpler model would make more pragmatic sense, as in the example in the following section.

In fact the theorem doesn't require $({Y},{e})$ to be the true environment either.
It only says that if the agent can perform regulation when placed in environment $({Y},{e})$, then it admits an interpretation whose model is $({Y},{e})$. There is nothing to stop $({Y},{e})$ from itself being a counterfactual that the observer is considering, rather than the true environment in which the agent is situated.

\subsection{\dots but some models are trivial}

\citeauthor{conant1970} concluded from their theorem that because every good regulator must have a model, there is no point in trying to design regulators in any way other than first building a model.
Can we draw a similar conclusion from our work?
We think not, and to see why 
we propose the following informal examples.

Consider a doorstop, whose job is to hold doors open when humans desire it.
Despite having no state and no dynamics, the doorstop `achieves its goal' by merely existing, and thus, intuitively, it doesn't seem to make sense to interpret it as an agent with a model.
Nevertheless our theorem says it can be so interpreted; how can we make sense of this?

More formally, consider the case where the state space ${X}$ of the agent and the spaces of sensor values and actions ${S}$ and ${A}$ have only one element each, so that effectively the agent has no state besides the state of existing.
The environment $({Y},{e})$ can still have nontrivial dynamics, and the good set ${G}$ and regulating set ${R}$ can be arbitrarily complicated.

Because ${X}$ has only one element, say ${X}=\{*\}$, we have ${X}\times{Y}\cong {Y}$.
For this reason we can think of ${G}$ and ${R}$ as if they are subsets of ${Y}$ instead of ${X}\times{Y}$.
It's then not hard to see that $\phi(*)$ must essentially just be ${G}$ and similarly $\psi(*)$ is ${R}$.
Because there is only one $*\in{X}$ the beliefs $\psi(*)$ don't change over time.
This is because the consistent updating that our theorem attributes to the agent is of the form ``I believe the system's state is in ${R}$ (hence also in ${G}$) and I know that if it's in ${R}$ now then whatever state it's in on the next time step will also be in ${R}$, so it's consistent to also adopt ${R}$ as my posterior belief.''

This is a trivial interpretation \citep[cf.\ the trivial models of][]{baltieri2025imp} in that although the model $({Z},{f})$ might be complicated and $\phi(*)$ and $\psi(*)$ could be complicated subsets of it, they don't depend in a non-trivial way on the agent's state. Because of this, although our theorem says the doorstop {can} be interpreted this way, it isn't necessarily a good idea to take this interpretation seriously.

This will not always be the case.
If we were to model a detective, who searches for clues, speaks to witnesses etc., eliminating possibilities until they can identify the culprit, then we would expect the model our theorem attributes to genuinely reflect the detective's knowledge about the case, updating in a nontrivial way according to what they find out.
The nontrivial interpretation arises because the detective is highly \emph{intertwined} with their environment, solving their task in a way that relies on complex feedbacks that involve both the detective's internal state and that of their environment.

Many systems of interest will lie between these two extremes, becoming intertwined with their environment to some extent while also relying on its dynamics (including the dynamics of their embodiment within it); such examples are common in ALIFE \citep[e.g.][]{braitenberg1986vehicles,beer2003dynamics}.

We believe there is much more that can be said about intertwinedness and non-triviality. For now we simply conclude that \emph{any} theorem with a statement along the lines of ``every good regulator has a model'' must somehow account for all these examples, and ours does this by allowing such systems to have models that can be more or less trivial.

\section{Conclusions and Discussion}

We have proved a theorem whose statement can be read as ``every good regulator must have a model,'' or more precisely, every good regulator can be interpreted as having a model. 
This notion of admitting an interpretation is more sophisticated than \citeauthor{conant1970}'s notion of ``being a model'' in that it involves the concept of belief updating.
Using this different concept of model allows our theorem to apply to \emph{all} good regulators, without the additional assumptions that Conant and Ashby and their successors have needed to make.

Our theory also has some resemblance to some versions of the free energy principle (FEP).
In particular the versions described in \citep{friston2019free,daCosta2021bayesian} involve a ``synchronisation map'' that has some similarity to our $\psi$, while the overall argument has some qualitative resemblance to our \cref{forward-closed-and-interpretations.lemma}: the synchronisation map arises from a stationary state in a somewhat similar fashion to the way our $\psi$ arises from a forward-closed set.
Indeed, the current work arose in part from an effort to remove some of the approximations and technical assumptions behind \citep{friston2019free} in order to understand what it's really saying.
We don't know whether our result relates to the FEP in a precise way.%

We have emphasised that in order to interpret a system as doing regulation (whether of the objective or subjective kind), an observer must make a series of choices.
These are, roughly, $(i)$ choosing which system to treat as an agent and where to draw its boundary; $(ii)$ choosing what the system is to be interpreted as trying to do; and $(iii)$ choosing \emph{how} it is to be interpreted as achieving its task.
None of these seem to be inherent properties of a system, so both the observer and the system seem to play unavoidable roles.

One might, however, try to eliminate the observer after all. One way to do this is to study \emph{every} possible division of a system into agent and environment, \emph{every} possible goal and \emph{every} valid way to achieve them.
The toolkit of category theory would be well suited to this approach.
Another approach would be to evaluate such choices numerically, giving metrics by which one could be considered better than another. 

Alternatively, one might seek biological arguments to regard some choices as more fundamental than others.
This approach is taken by autopoietic theory and enactivism, which tend to emphasise the importance of metabolic self-maintenance over other goals that might be attributed to a system.
Perhaps this can be formalised, with the good set consisting of all those states in which the system still exists.

We allowed the possibility that the agent can't fully observe its environment; this is the situation that any embodied agent finds itself in.
As a consequence, the model our theorem attributes to the agent includes its uncertainty about the environment's unknown state.
Our result hints that the notion of model can be rehabilitated even within a purely dynamics-based, behavioural approach: models and dynamical coupling appear to be two sides of the same coin.

\section{Acknowledgements}

The authors thank Simon McGregor and Inman Harvey for productive discussions that influenced this work.

This paper was made possible through the support of Grants 62229 (supporting Virgo and Biehl) and 62828 (supporting Biehl) from the John Templeton Foundation. The opinions expressed in this publication are those of the authors and do not necessarily reflect the views of the John Templeton Foundation.

Manuel Baltieri is supported by JST, Moonshot R\&D Grant Number JPMJMS2012.

Matteo Capucci is an Independent Researcher funded by the Advanced Research + Innovation Agency (ARIA).

\footnotesize
\bibliographystyle{hapalike}
\bibliography{refs}

\end{document}